%% file: main_neurips.tex
\documentclass{article}

\usepackage[preprint]{neurips_2026}

\usepackage[utf8]{inputenc} 
\usepackage[T1]{fontenc}    
\usepackage{hyperref}       
\usepackage{url}            
\usepackage{booktabs}       
\usepackage{amsfonts}       
\usepackage{nicefrac}       
\usepackage{microtype}      
\usepackage{xcolor}         

\usepackage{amsmath}
\usepackage{amssymb}
\usepackage{mathtools}
\usepackage{amsthm}
\usepackage{algorithm}
\usepackage{algpseudocode}
\usepackage{wrapfig}
\usepackage{subcaption}

\usepackage[capitalize,noabbrev]{cleveref}

\theoremstyle{plain}
\newtheorem{theorem}{Theorem}[section]

\theoremstyle{definition}
\newtheorem{definition}[theorem]{Definition}

\theoremstyle{remark}

\usepackage[disable]{todonotes}

\title{Randomness is sometimes necessary for coordination}

\author{%
  Rohan Patil$^*$ \quad\quad Jai Malegaonkar\thanks{Equal contribution.} \quad\quad Henrik I. Christensen \\
  Department of Computer Science and Engineering\\
  University of California San Diego\\
  San Diego, CA 92093 \\
  \texttt{\{rpatil, jmalegaonkar, hichristensen\}@ucsd.edu} \\
}

\begin{document}

\maketitle

\begin{abstract}
Full parameter sharing is standard in cooperative multi-agent reinforcement learning (MARL) for homogeneous agents. Under permutation-symmetric observations, however, a shared deterministic policy outputs identical action distributions for every agent, making role differentiation impossible. This failure can theoretically be resolved using symmetry breaking among anonymous identical processors, which requires randomness. We propose Diamond Attention, a cross-attention architecture in which each agent samples a scalar random number per timestep, inducing a transient rank ordering that masks lower-ranked peers from agent-to-agent attention while leaving task attention fully unmasked. This realizes a random-bit coordination protocol in a single broadcast round, and the set-based attention enables zero-shot deployment to teams of different sizes. We evaluate across three regimes that isolate when structured randomness matters. On the perfectly symmetric XOR game, our method achieves $1.0$ success while all deterministic baselines plateau near $0.5$. On control coordination tasks, a policy trained on $N=4$ generalizes zero-shot to $N \in [2,8]$. On SMACLite cross-scenario transfer, we achieve zero-shot transfer where standard baselines cannot transfer due to structural limitations. Furthermore, replacing the structured mask with standard dropout-based randomness results in a 0\% win rate, confirming that protocol-space structure, not stochastic noise, is the operative ingredient. \url{https://anonymous.4open.science/r/randomness-137A/}
\end{abstract}

\input{sections/introduction}

\input{sections/related}
\input{sections/motivation}
\input{sections/experiments}
\input{sections/conclusion}

\bibliography{ref}
\bibliographystyle{icml2026}

\clearpage
\appendix

\input{sections/appendix}


\end{document}

%% file: sections/introduction.tex
\section{Introduction}
\label{sec:introduction}

A widely adopted design choice in cooperative multi-agent reinforcement learning (MARL)
is full parameter sharing, where all homogeneous agents execute copies of a single learned
policy~\cite{gupta2017cooperative, terry2020revisiting}. Combined with the centralized
training and decentralized execution (CTDE) paradigm~\cite{lowe2017multi} and value
decomposition methods such as VDN~\cite{sunehag2017value} and
QMIX~\cite{rashid2020monotonic}, this approach reduces sample complexity, simplifies
training infrastructure, and scales naturally with team size. It has become the default
substrate for cooperative MARL~\cite{sunehag2017value, rashid2020monotonic,
gronauer2022multi, oroojlooy2023review}. This default fails in tasks with multi-modal
reward structures, where multiple equally optimal joint strategies exist. Under such
conditions, all agents produce identical action distributions when observations are
structurally identical, making role differentiation impossible. \citet{fu2022revisiting}
use the XOR game---where two agents must choose opposite actions to receive any
reward---to show that a shared deterministic policy outputs the same action for both
agents, guaranteeing zero reward regardless of training duration, illustrating that this
is not an empirical pathology that better optimization can resolve.

\citet{angluin1980local} proved that symmetry breaking among anonymous processors is
impossible deterministically; \citet{case2005learning} extended this to cooperative
games, establishing that access to shared random bits is \emph{necessary}, not merely
beneficial, for coordination when agents lack unique identifiers.
Section~\ref{sec:related} develops these theoretical foundations and surveys how prior
work falls short on each axis.

Existing approaches address this problem on two axes, but no prior method satisfies
both simultaneously. Methods that modify parameter sharing or optimization dynamics
improve coordination in symmetric settings but remain deterministic at execution time,
failing the theoretical randomness requirement. Methods that achieve symmetry breaking
through sequential execution resolve the symmetry problem but require
$\mathcal{O}(N)$ inference rounds and an externally imposed execution
order---which, in a truly decentralized homogeneous setting, is itself the
symmetry-breaking problem.

We operationalize the theoretical prescription of \citet{case2005learning} in a practical
MARL architecture. Our proposed \emph{Diamond Attention} is a cross-attention mechanism
between agent and task embeddings that incorporates structured random masking. At each
timestep, every agent samples a scalar random number and shares it with the team via a
single broadcast round. These scalars induce a strict rank ordering over agents, generating
asymmetric attention masks over the agent dimension: each agent masks all agents ranked
below it and attends only to agents at equal or higher rank. The mask applies only to
agent-to-agent attention; task attention remains fully unmasked. The resulting asymmetry
creates a dynamic, per-timestep hierarchy in which high-ranked agents attend to few peers
and act largely independently, while low-ranked agents condition their behavior on what
higher-ranked agents are doing. Because Diamond Attention operates over sets of agent and
task embeddings, the architecture accepts any number of agents and any number of tasks
without modification: policies trained on one team size generalize zero-shot to others as
an inherent property of computing attention over variable-length sequences.

Our contributions are as follows:
\begin{itemize}
  \item \textbf{Theory.} We formalize the equivalence between XOR coordination and random-bit sharing , providing the theoretical bridge that motivates each architectural component of Diamond Attention and grounds the structured mask in the necessity result of Case et al. (2005).
  \item \textbf{Architecture.} We propose Diamond Attention, which realizes this protocol
    via structured random masking in a single broadcast round while retaining the set-based
    scalability of standard cross-attention. The architecture requires no sequential
    execution and no external agent identifiers.
  \item \textbf{Empirical validation.} We validate across three regimes: Diamond Attention
    is the only approach to achieve $1.0$ success on the XOR game where all deterministic
    baselines plateau at the random-action floor; a policy trained on $N=4$ generalizes
    zero-shot to $N \in [2, 8]$ on VMAS continuous coordination tasks; and the architecture
    achieves zero-shot transfer from easier to harder SMACLite scenarios where all baselines
    and ablations fail entirely.
\end{itemize}

%% file: sections/related.tex
\section{Related Work}
\label{sec:related}

Parameter sharing (PS) has become the dominant paradigm in cooperative
MARL~\cite{gupta2017cooperative, terry2020revisiting}, especially when combined with
the CTDE framework~\cite{lowe2017multi} and value decomposition methods such as
VDN~\cite{sunehag2017value} and QMIX~\cite{rashid2020monotonic}. Under symmetric
observations a shared deterministic policy produces identical outputs for all agents, and
gradient updates that improve one agent's strategy identically affect all
others~\cite{fu2022revisiting}, trapping the system in symmetric equilibria. Recent work
mitigates this while preserving PS efficiency: Kaleidoscope~\cite{li2024kaleidoscope}
introduces learnable sparse masks to induce per-agent heterogeneity at training time,
GradPS~\cite{qingradps} resolves gradient conflicts that arise from opposing update
signals, and pH-MARL~\cite{sebastian2025physics} leverages port-Hamiltonian geometric
priors to enforce valid distributed coordination structures. However, Kaleidoscope's
learned masks are deterministic once trained; GradPS resolves gradient conflicts during
optimization but produces no execution-time differentiation between agents; and
pH-MARL's geometric priors enforce coordination structure without introducing the
stochasticity that theory demands. These methods address symptoms of the symmetry
problem rather than its theoretical root.

A related challenge is zero-shot scalability: deploying a trained policy to teams of
varying size without retraining~\cite{liu2024scaling}. UPDeT~\cite{hu2021updet}
achieves scalability by treating each agent's observation as part of a variable-length
sequence, but agents act simultaneously from independent observations, leaving symmetry
breaking unaddressed in multi-modal reward settings. Autoregressive models
MAT~\cite{wen2022multi} and Sable~\cite{mahjoub2024sable} achieve symmetry breaking by
construction through sequential execution, but at the cost of $\mathcal{O}(N)$ inference
latency and an externally imposed execution order---establishing that order requires a
coordination mechanism, and in a truly decentralized homogeneous setting, agreeing on
who executes first is equivalent to solving the symmetry-breaking problem, making the
approach circularly dependent on an assumption it cannot supply. We make no claim of
superior coordination quality where a fixed ordering is externally available; our
contribution is orthogonal, addressing the gap that UPDeT and MAT each leave open on a
different axis.

The theoretical underpinnings of our approach trace to distributed
computing. \citet{angluin1980local} proved that symmetry breaking in anonymous networks
is impossible deterministically. \citet{fischer1985impossibility} established that
consensus is unattainable in asynchronous systems even with a single fault, reinforcing
that deterministic coordination protocols are fragile even under mild adversarial
conditions. \citet{case2005learning} showed that shared random bits resolve this
impossibility for cooperative games, making coordination achievable with bounded failure
probability. The MP-MAB literature~\cite{liu2010distributed, shi2021multi} reaches the
same conclusion through collision models that are structurally identical to coordination
failures in multi-modal MARL: agents assigned to the same arm and agents selecting the
same action face the same orthogonalization problem, and in both cases deterministic
policies cannot escape.

Attention-based methods such as UPDeT achieve zero-shot scalability over variable team
sizes but leave symmetry breaking unaddressed in multi-modal reward settings.
Autoregressive methods such as MAT and Sable achieve symmetry breaking by construction
but require $\mathcal{O}(N)$ sequential rounds and an externally imposed execution
order. Diamond Attention addresses the intersection: structured randomness, grounded in
the theoretical necessity established by \citet{case2005learning}, enables coordination
in a single broadcast round while retaining the set-based scalability of standard
cross-attention. No prior MARL architecture has realized this protocol as an
architectural primitive.

%% file: sections/motivation.tex
\section{Motivation \& Model Architecture}
\label{sec:motivation}

We begin by formalizing the core limitation that motivates our approach.

\begin{definition}[Symmetry Breaking]
\label{def:symmetry_breaking}
In a cooperative multi-agent system with $n$ homogeneous agents sharing a single policy 
$\pi_\theta$, \emph{symmetry breaking} is the ability of agents to produce differentiated 
action distributions despite receiving structurally identical observations. Formally, if 
the observations $o_i$ and $o_j$ of agents $i$ and $j$ are equal under permutation of 
agent indices, then $\pi_\theta(o_i) = \pi_\theta(o_j)$ by parameter sharing, so 
differentiated behavior must arise from differentiated internal state rather than from 
the policy itself.
\end{definition}

We next detail the XOR game, which provides the theoretical grounding for our 
architecture, and derive how each component of the resulting protocol is realized in the 
Diamond Attention mechanism.

\subsection{XOR Game}
\label{subsec:xor_game}

The XOR game~\cite{fu2022revisiting} is a single-step cooperative game where two players 
each select from two actions; both receive reward $1$ if their actions differ, and $0$ 
otherwise (\cref{tab:xor_payoff}). Its generalization to $n$ players and $k$ actions 
($n \le k$) yields reward only when all players select distinct actions, mirroring 
collision models in the multi-agent multi-armed bandit literature~\cite{liu2010distributed}.

\begin{table}[h]
    \centering
    \begin{tabular}{|c|c|}
        \hline
        0 & 1 \\
        \hline
        1 & 0 \\
        \hline
    \end{tabular}
    \caption{Payoff matrix for the 2-player XOR game.}
    \label{tab:xor_payoff}
\end{table}

While an autoregressive approach can solve XOR by having agents act sequentially and 
condition on predecessors' actions~\cite{fu2022revisiting}, the required communication 
rounds grow linearly with team size. More fundamentally, in a truly decentralized 
homogeneous setting, establishing the execution order itself requires random bits, which 
is equivalent to solving the generalized XOR game.

\citet{case2005learning} demonstrate that coordination games like XOR can be solved in 
homogeneous settings by sharing random bits in a single communication round: each player 
generates a string of bits, and with bounded probability all strings are unique, enabling 
coordination through any fixed total ordering on bit strings. Without randomness, agents 
provably fail to obtain the optimal payoff in a truly decentralized setting. To formalize this, we model players as machines aligned with the Inexhaustible Interactive Turing Machine framework~\cite{kusters2013iitm}. In our formulation, machines do not engage in 
point-to-point communication but rather broadcast messages.\footnote{\citet{kusters2013iitm} have point-to-point communication} While the source of a 
broadcast is indistinguishable, agents can determine how many machines are broadcasting 
the same input --- realizable through a frequency-modulated receiver where signal 
strength indicates the number of broadcasting machines.


\begin{definition}[Player]
\label{def:player}
    A player is a machine initialized with a finite input that may execute four routines,
    transitioning between them based on the input: \textbf{Compute} (run any
    terminating Turing machine on the current tape, whose final state becomes the new
    input); \textbf{Broadcast} (transmit a portion of the input to other agents);
    \textbf{Receive} (append incoming broadcasts to the input); and \textbf{Sample}
    (append freshly sampled uniform random bits to the input). Each non-compute routine
    returns to Compute. The final input state on halt is the player's output.
\end{definition}
\begin{definition}[Homogeneity]
\label{def:homogeneity}
    Two players $A$ and $B$ are \emph{homogeneous} if, for any input $I$ and any string of 
    random bits $r$ supplied to both players' Sample routines, $A$ and $B$ produce the 
    same final output, both consume exactly the bits in $r$, and neither generates random 
    bits beyond $r$. 
\end{definition}

It is trivial to see that homogeneity is transitive.

\begin{theorem}
\label{thm:xor_execution_equivalence}
    The probability that $n$ homogeneous players solve the generalized XOR game ($n$ 
    players, $n$ actions) equals the probability of agreeing on an execution order when 
    each player generates and shares $k$ bits.
\end{theorem}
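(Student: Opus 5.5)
The plan is to prove the two inequalities separately: $P_{\text{XOR}} \le P_{\text{order}}$ and $P_{\text{order}} \le P_{\text{XOR}}$. The pivot in both directions is the event that the $n$ sampled $k$-bit strings are pairwise distinct. For uniform bits this event has probability $\prod_{i=0}^{n-1}(1 - i/2^k)$, though the exact value is not needed.

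\textbf{Step 1: agreeing on an execution order is equivalent to distinct strings.}
First fix what ``agreeing on an execution order'' means. Each player must output a rank in $\{1,\dots,n\}$, the ranks must form a permutation, and the only randomness available is the $k$ bits each player samples and broadcasts.

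If the strings are pairwise distinct, every player receives the same multiset of broadcasts. Every player therefore computes the same lexicographic sort and reads off its own position as its rank.

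For the converse, suppose players $A$ and $B$ sampled the same string $r$. Every player receives the same multiset of broadcasts; the receiver model reveals counts but not sources. So $A$ and $B$ begin with identical input and identical random bits. By Definition~\ref{def:homogeneity}, they produce the same final output, and since homogeneity is transitive this holds for any pair of players. Hence $A$ and $B$ output the same rank, and no valid order exists.

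So the probability of agreeing on an order equals the probability that the strings are pairwise distinct, under the optimal protocol.

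\textbf{Step 2: order $\Rightarrow$ XOR.}
Given an agreed execution order, the player of rank $i$ plays action $i$. The actions are then distinct, so the generalized XOR game is solved. Thus $P_{\text{XOR}} \ge P_{\text{order}}$.

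\textbf{Step 3: XOR $\Rightarrow$ order.}
Take any protocol that solves XOR: it makes players output pairwise distinct actions in $\{1,\dots,n\}$. That is exactly a bijection from players to ranks, so it can be read as an execution order. What remains is to show this protocol can succeed only on the event of distinct strings, which gives $P_{\text{XOR}} \le P_{\text{order}}$.

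This is where I expect the main obstacle. I would argue by induction over rounds of Compute, Broadcast, Receive and Sample that two players with equal sampled bits remain in identical states throughout. The base case is that players start with the same finite input and are homogeneous. For the inductive step, received broadcasts are symmetric because of anonymous, count-only reception; sampled bits are equal by assumption; and homogeneity gives identical Compute outputs. Consequently, on the collision event two players choose the same action and the XOR game fails.

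The subtlety is that a XOR protocol might sample more than $k$ bits, or sample adaptively over several rounds. I would handle this by treating $k$ as the total number of bits each player consumes. Definition~\ref{def:homogeneity} supports this, since it requires that both players consume exactly the bits in $r$ and nothing beyond them. The two games then draw on the same randomness budget, and $P_{\text{XOR}} \le \Pr[\text{distinct}] = P_{\text{order}}$.

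Combining Steps 1–3 gives the claimed equality.
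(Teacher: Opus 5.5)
Your proof is correct, but it reaches the equality by a different route from the paper's.

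\textbf{The paper's route.} The paper takes $k$ to be the number of \emph{transmitted} bits. It pads an optimal XOR player $M$ into $M'$ that always broadcasts exactly $k$ bits. It then treats each action distribution as a function $\pi(k_i,k_1,\dots,k_n)$ of the transmitted strings and argues that equal strings force identical distributions (a collision), while distinct strings permit a unique action per player. From this it bounds $P(\text{No Collision})\le\binom{2^k}{n}n!/2^{nk}$ via uniform sampling. Finally it notes that the sort-by-bit-string order protocol succeeds with exactly this closed-form probability.

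\textbf{Your route.} You pivot on the \emph{sampled} bits and prove both inequalities by reductions. In one direction, rank $i$ plays action $i$. In the other, equal sampled strings keep two players in identical states round by round, so they pick the same action. You need neither the padding construction nor the closed form.

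\textbf{Comparison.} Your pivot is the sturdier one. The paper's step from equal transmitted strings to a collision tacitly excludes private randomness that is never broadcast. For $n=2$, $k=0$ the closed form is $0$, yet private coin flips succeed with probability $1/2$. Reading $k$ as the total bits consumed, as the homogeneity definition's ``neither generates random bits beyond $r$'' suggests, is exactly what makes the statement true. Your round-by-round induction also makes explicit the multi-round symmetry argument the paper leaves implicit. In exchange, the paper's route delivers the explicit value $\binom{2^k}{n}n!/2^{nk}$, which is what motivates the continuous $r_i$ in the architecture.

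\textbf{One assumption to state.} Your inductive step needs the players to be copies of one machine running in synchronous rounds, as in the paper's ``$n$ copies of $M'$''. The homogeneity definition as written only constrains final outputs on a fixed input and bit string, not intermediate broadcasts.
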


\begin{proof}
    \cref{app:proof}
\end{proof}

\subsection{From Theory to Model Architecture}
\label{subsec:bridge}

\cref{thm:xor_execution_equivalence} establishes that solving the generalized XOR game 
reduces to each player generating and sharing random bits to agree on an execution order. 
\cref{tab:theory_to_arch} maps each component of this theoretical protocol to its 
realization in Diamond Attention. The key insight is that attention masking serves as the mechanism by which random scalars 
induce differentiated behavior. Even though all agents share identical network parameters, 
different random numbers produce different effective attention patterns, yielding distinct 
latent representations and consequently distinct action distributions. This resolves the 
symmetry identified in \cref{def:symmetry_breaking} at the attention level rather than 
through parameter differences.

\begin{table}[h]
\centering
\small
\begin{tabular}{p{0.44\linewidth} p{0.48\linewidth}}
\toprule
\textbf{Theoretical Component} & \textbf{Architectural Realization} \\
\midrule
Player samples random bits 
(\cref{def:player}, Routine 4) &
    Agent samples $r_i \sim \mathrm{Uniform}[0,1]$, appended to its observation at every 
    timestep \\
\addlinespace
Players broadcast bits 
(\cref{def:player}, Routine 2) &
    All agents' random numbers are visible in the shared observation space \\
\addlinespace
Fixed strict ordering over bit strings &
    Pairwise comparison: if $r_j < r_i$, agent $i$ masks agent $j$ from its attention; 
    agents with equal or higher rank remain unmasked \\
\addlinespace
Execution order determines role assignment &
    Asymmetric mask creates a dynamic per-timestep hierarchy; high-ranked agents act as 
    temporary leaders attending primarily to task embeddings \\
\addlinespace
Collision probability 
(\cref{thm:xor_execution_equivalence}) &
    With $r_i \sim \mathrm{Uniform}[0,1]$, $P(r_i = r_j) = 0$; a strict total ordering 
    exists with probability $1$ \\
\bottomrule
\end{tabular}
\caption{\small Mapping from the theoretical coordination protocol of 
\cref{thm:xor_execution_equivalence} to the Diamond Attention architecture.}
\label{tab:theory_to_arch}
\end{table}

\label{subsec:diamond_attention}
\begin{figure}[h]
    \begin{minipage}[c]{0.6\linewidth}
        \centering
        \includegraphics[width=\linewidth]{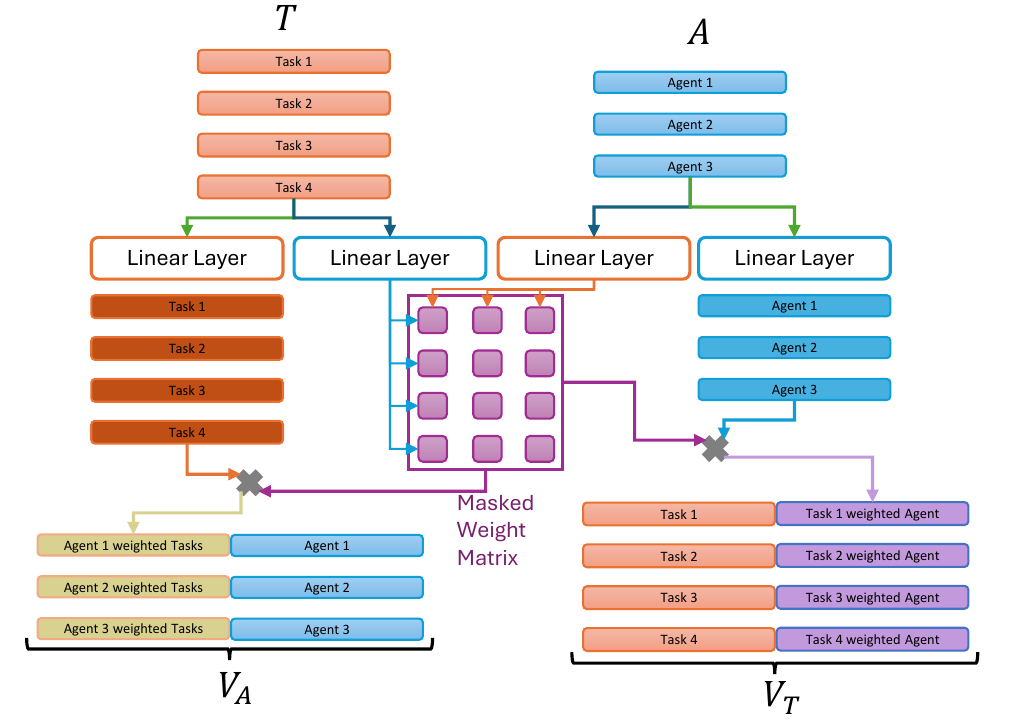}
        \caption{\small Calculation of $V_A$ and $V_T$ from agent embeddings $A$ and task 
        embeddings $T$. The weight matrix follows the standard attention mechanism; the 
        matrix multiplication uses the weight matrix resulting from masking the calculated 
        scores.}
        \label{fig:diamond_attention}   
    \end{minipage}
    \hfill
    \begin{minipage}[c]{0.37\linewidth}
        \centering
        \includegraphics[width=\linewidth]{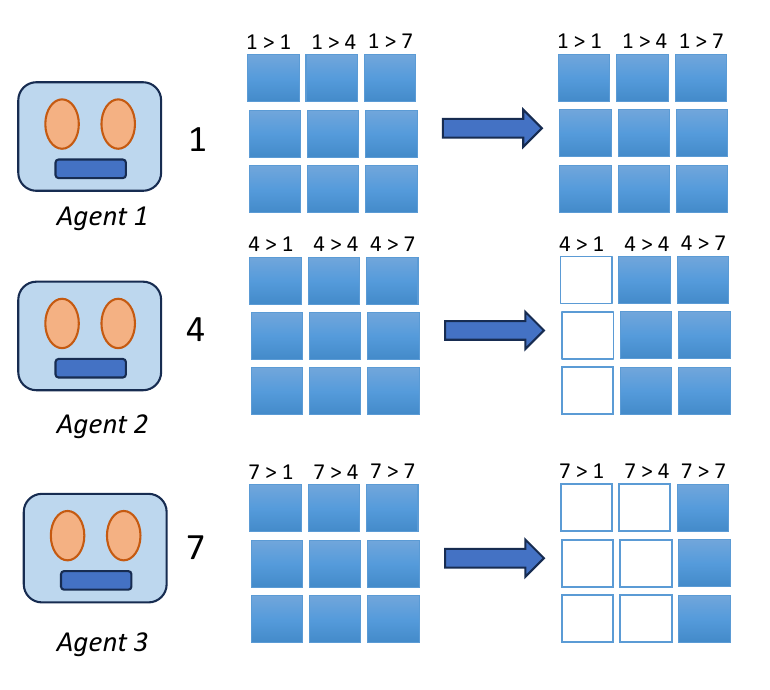}
        \caption{\small Mask generation: each agent generates a random number. For each column, the 
        system computes whether an agent's number exceeds others'; if so, the entry is 
        masked. Hollow squares represent masked entries in the task $\times$ agent mask 
        matrices.}
        \label{fig:masking}
    \end{minipage}
\end{figure}

We develop a cross-attention architecture that assigns distinct sub-goals to different 
agents. Our aim is an architecture that maps any number of agents to any number of tasks 
without modification. \cref{eq:attention_formula} shows the standard attention mechanism, where $Q, K, V$ are 
the query, key, and value matrices and $d_k$ is the key 
dimension~\cite{vaswani2017attention}. We employ cross-attention between agent embeddings and task embeddings, termed 
\emph{Diamond Attention} (\cref{fig:diamond_attention}).

\begin{equation}
\label{eq:attention_formula}
    \text{Attention}(Q, K, V) = \text{softmax}\!\left(\frac{QK^T}{\sqrt{d_k}}\right)V
\end{equation}

\begin{figure}[h]
    \centering
    \includegraphics[width=0.99\linewidth]{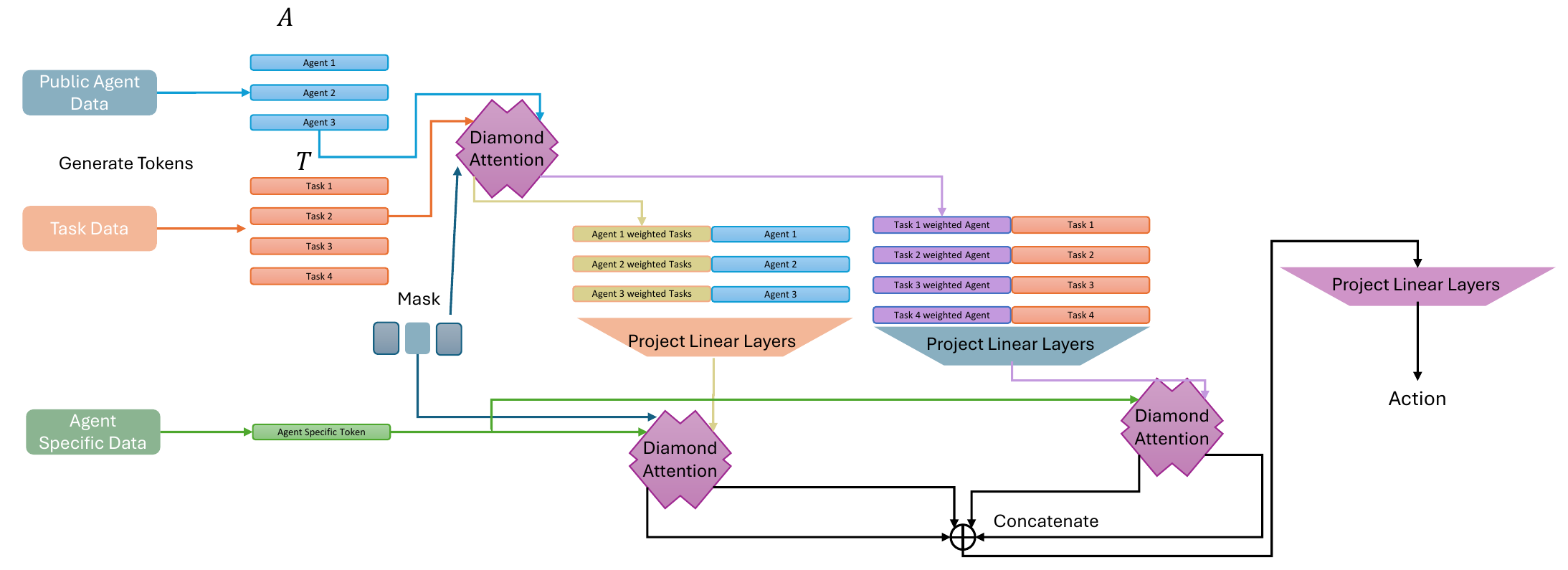}
    \caption{\small The complete architecture consists of three Diamond Attention blocks. The first computes attention between tasks and agents; agent-specific data constructs a fixed-length vector for action generation. Each \emph{linear projection} consists of multiple linear layers with intermediate activation. The depicted mask is simplified to illustrate that an agent with a lower random number is masked. Note the mask usage during Diamond Attention between agent-weighted tasks and the agent-specific 
    token.}
    \label{fig:model_architecture}
\end{figure}

\textbf{Structured mask construction.}
Each agent $i$ appends a scalar $r_i \sim \mathrm{Uniform}[0,1]$ to its observation at 
every timestep, sampled independently across agents and steps. These scalars induce a 
transient rank ordering: agent $i$ holds higher rank than agent $j$ iff $r_i > r_j$. 
Agent $i$ masks out its \emph{lower-ranked} peers from its cross-attention over agent 
embeddings, while leaving attention over task embeddings fully unmasked. Formally, for 
the $k$-th key position in agent $i$'s attention:

\begin{equation}
\label{eq:mask_construction}
\mathbf{M}_{i,k} =
\begin{cases}
  0        & \text{if } r_k \geq r_i \\
  -\infty  & \text{otherwise.}
\end{cases}
\end{equation}

Under standard parameter sharing without masking, all agents process identical network 
weights and, in symmetric configurations, produce identical action distributions. The 
structured mask breaks this symmetry at the attention level: agent $i$ with $r_i = 0.7$ 
sees a different effective attention pattern than agent $j$ with $r_j = 0.3$, because 
$i$'s mask suppresses attention to lower-ranked agents while $j$ attends to the full 
peer set. This produces differentiated latent representations, and consequently 
differentiated actions, even under fully shared parameters.

\cref{fig:masking} illustrates mask generation for three agents and three tasks.\footnote{We 
refer to subtasks as tasks, considering only cases where subtasks can be created.} The 
mask is computed independently and in parallel for each agent, requiring no sequential 
communication. The agent with the lowest $r_i$ attends to all peers and all tasks; 
agents with progressively higher ranks attend to a shrinking peer neighborhood, with the 
highest-ranked agent attending to no peers at all, conditioning its action solely on 
task structure. The mask is asymmetric across agents --- no two agents suppress the same 
subset of peers --- which is the minimal structural condition required to break the 
homogeneity that causes coordination failures in XOR. Task attention is intentionally 
left unmasked so that every agent, regardless of rank, maintains full access to the goal 
structure.

\paragraph{Task embedding construction.}
Task embeddings are constructed by projecting raw task features through a learned 
two-layer MLP with SiLU activation. In Simple Spread, each landmark's 2D relative 
position constitutes the task feature vector. In Food Collection, the relative 
positions of food items serve as task features. In SMACLite, enemy unit features (relative position, 
health, unit type) are projected into the task embedding space. Agent embeddings are 
constructed analogously from each agent's local observation (position, velocity) and 
other agents' relative positions and velocities. The architecture is scenario-agnostic: 
swapping the feature extractor suffices to apply Diamond Attention to a new domain.

\paragraph{Complete architecture.}
Given agent embeddings $A$ and subtask embeddings $T$, we calculate two sets of weighted 
embeddings $V_A$ and $V_T$ as defined in \cref{eq:diamond_attention_formula}, where 
$W_A$ and $W_T$ are learned weight matrices, $M$ is the mask (masked entries $-\infty$, 
otherwise $0$), and $\oplus$ denotes concatenation.

\begin{equation}
\label{eq:diamond_attention_formula}
    V_A = \mathrm{softmax}\!\left(\frac{TA^T}{\sqrt{d_k}} + M\right)(W_A A) \oplus A \,\,;\,\,
    V_T = \mathrm{softmax}\!\left(\frac{TA^T}{\sqrt{d_k}} + M\right)^{\!\top}(W_T T) \oplus T
\end{equation}

The complete architecture (\cref{fig:model_architecture}) comprises three Diamond 
Attention blocks. The first computes cross-attention between tasks and agents. 
Agent-specific data then constructs a fixed-length vector from which the final action is 
generated. This design accepts fixed-length embeddings per agent and task, rendering it 
independent of the total agent or task count.

%% file: sections/experiments.tex
\section{Experiments}
\label{sec:experiments}

We evaluate Diamond Attention across three settings that isolate distinct coordination
pathologies:
\textbf{The XOR Game:} Tests symmetry breaking in perfectly symmetric reward landscapes
where deterministic parameter sharing theoretically fails.
\textbf{Continuous Coordination (VMAS):} Tests zero-shot generalization to variable agent
counts without retraining in both static and non-stationary tasks.
\textbf{StarCraft Multi-Agent Challenge:} Tests coordination against an active opponent
and cross-scenario transfer with variable enemy counts.

\paragraph{Implementation details.}
We modify the PPO~\cite{schulman2017proximal} implementation from
StableBaselines3~\cite{stable-baselines3} to process observations from all agents and
produce actions using a shared policy, as described in \cref{subsec:diamond_attention}.
We employ a common critic and train using the global team reward~\cite{yu2022surprising}.
Baselines are described per experiment below.

\subsection{XOR Game: Isolating Symmetry Breaking}
\label{subsec:experiments_xor}

In an $n$-player, $k$-action XOR game, homogeneous agents sharing a deterministic policy
inevitably output identical action distributions, producing collisions that yield zero
reward~\cite{fu2022revisiting}. \cref{tab:xor_results} reports success rates. As
predicted by theory, deterministic baselines and the no-mask ablation collapse to the
random-action floor: $0.5$ in the $n=k=2$ setting and $2!/3^2 \approx 0.22$ in the $n=2, k=3$ (train) $n=3, k=3$ (eval)
setting. In contrast, our approach achieves a $1.0$ success rate across $n=k=2$ and
$n=k=3$ when trained directly on those configurations, under both greedy ($\pi^*$) and
stochastic ($\pi$) action selection. The structured random mask serves as an implicit
rank-assignment mechanism, allowing agents to assume distinct roles without explicit
communication. The $n=2, k=3 \to n=3, k=3$ generalization column tells a different
story: our method fails to generalize when trained on a strict subset of the action space
and then deployed at full team size. We discuss this generalization brittleness, and its
contrast with our generalization behavior in dynamic environments, in
\cref{subsec:learning_vs_coordination}. In case of MAT, however, it can be observed that it also fails to grasp the entire structure as greedy policy fails. However, it is able to keep some understanding of the solution as we can see that sampling from the policy gives a good success rate.

\begin{table}[h]
    \centering\small
    \begin{tabular}{|c|c|c|c|c|c|c|}
        \hline
         \textbf{Train} & \multicolumn{2}{|c|}{$n=k=2$} & \multicolumn{2}{|c|}{$n=2, k=3$} & \multicolumn{2}{|c|}{$n=k=3$} \\
         \hline
         \textbf{Eval} & \multicolumn{2}{|c|}{$n=k=2$} & \multicolumn{2}{|c|}{$n=3, k=3$} & \multicolumn{2}{|c|}{$n=k=3$}\\
         \hline
         & $\pi^*$ & $\pi$ & $\pi^*$ & $\pi$ & $\pi^*$ & $\pi$ \\
        \hline
        MAPPO & $0.0$ & 0.5 & 0.0 & 0.22 & 0.0 & 0.22\\
        \hline
        QMIX & $0.0$ & 0.5 & 0.0 & 0.22 & 0.0 & 0.22\\
        \hline
        IPPO & $0.0$ & 0.5 & 0.0 & 0.22 & 0.0 & 0.22\\
        \hline
        MASAC & $0.0$ & 0.5 & 0.0 & 0.22 & 0.0 & 0.22\\
        \hline
        pH-MARL & $0.0$ & 0.5 & 0.0 & 0.22 & 0.0 & 0.22\\
        \hline
        GSA & $0.0$ & 0.5 & 0.0 & 0.22 & 0.0 & 0.22\\
        \hline
        MAT & \textbf{1.0} & \textbf{1.0} & $0.0$ & $0.44$ & \textbf{1.0} & \textbf{1.0}\\
        \hline
        Ours & \textbf{1.0} & \textbf{1.0} & 0.0 & 0.0 & \textbf{1.0} & \textbf{1.0}\\
        \hline
        Ours w/o mask & $0.0$ & 0.5 & 0.0 & 0.22 & 0.0 & 0.22\\
        \hline
        Ours (dropout) & $0.0$ & $0.5$ & 0.0 & 0.22 & 0.0 & 0.22\\
        \hline
    \end{tabular}
    \caption{\small Average success rates on the XOR game. \textit{Train} indicates the
    training configuration, \textit{Eval} the evaluation configuration. $\pi^*$ uses
    greedy action selection, $\pi$ samples from the policy. Our method and MAT are the
    only approaches to achieve $1.0$ success on directly-trained configurations, but via
    fundamentally different mechanisms: MAT breaks symmetry through autoregressive
    decoding conditioned on a fixed agent ordering, while ours uses structured random
    masking that is agnostic to team size. The $n=2, k=3 \to n=3, k=3$ column tests
    generalization to a larger team and is discussed in
    \cref{subsec:learning_vs_coordination}.}
    \label{tab:xor_results}
    \vspace{-15pt}
\end{table}

\subsection{Continuous Coordination: Isolating Scalability}
\label{subsec:experiment_vmas}

We use the Vectorized Multi-Agent Simulator (VMAS)~\cite{bettini2022vmas} to test
zero-shot scalability of a policy trained on $N=4$ agents deployed on $N \in [2, 8]$
without retraining. Two scenarios isolate different facets:

\textbf{Simple Spread (Static Topology):} A cooperative navigation task where $N$ agents
must occupy $N$ landmarks in a 2D space $[-1, 1]^2$. Tests spatial distribution
strategies in a static environment.

\textbf{Food Collection (Dynamic Topology):} A custom foraging environment where $N_a$
agents collect $N_f$ food items that respawn at random locations on collection. Tests
adaptation to non-stationary targets. Agents receive a reward of $+20$ per collection,
with penalties for collisions.

\paragraph{Baselines and setup.}
We compare against GSA and pH-MARL~\cite{sebastian2025physics}, MAT~\cite{wen2022multi},
and MAPPO~\cite{yu2022surprising}, QMIX~\cite{rashid2020monotonic},
IPPO~\cite{de2020independent}, MASAC~\cite{lowe2017multi} via
BenchMARL~\cite{bettini2024benchmarl}, all trained on $N=4$. GSA and pH-MARL support
variable-size evaluation; the remaining baselines do not. Evaluation runs over 400
timesteps across 64 seeds using the per-agent normalized cumulative reward
$R = \frac{1}{N}\sum_{t=1}^{T} r(s,a)$~\cite{sebastian2025physics}; values are negative,
\emph{less negative is better}.

\textbf{Scalability across team sizes.}
\cref{fig:simple_spread} reports Simple Spread results. The monotonic reward decrease
with $N$ is expected, as per-agent penalties accumulate with team size. Our method
maintains near-optimal performance across the full range $N \in [2,8]$ from a single
training run, with notably low variance throughout. GSA and pH-MARL degrade
substantially at unseen team sizes, performing poorly even at small $N$ and worsening
monotonically as $N$ grows --- confirming that graph-based and physics-informed
inductive biases do not substitute for the decoupling from agent count that set-based
cross-attention provides. MAT appears only at $N{=}4$ for the architectural reasons
discussed in \cref{sec:related}.


\cref{fig:food_collection} shows Food Collection where non-stationarity makes the 
task more challenging. Our method
outperforms GSA and pH-MARL across all team sizes; both degrade sharply from $N=2$
onward, while our cross-attention mechanism remains stable across the full range $N \in
[2,8]$. Fixed-team baselines MAPPO and IPPO are competitive at the training
configuration $N=4$ but cannot be deployed elsewhere. The \textit{Ours w/o mask}
ablation performs comparably to the full model here --- a predicted consequence of our
analysis, not a failure mode. When the environment supplies sufficient asymmetry through
observation variance (stochastic food respawning differentiates agents at each step),
the protocol-space mask is redundant within distribution. The mask's contribution
becomes essential precisely when that environmental signal shifts at deployment, as the
SMAC zero-shot results in \cref{subsec:experiment_smac} confirm.
\cref{app:generalization_collection} shows performance across $N_a$ and $N_f$.

\begin{figure}[htbp]
    \centering
    \begin{subfigure}[t]{0.49\linewidth}
        \includegraphics[width=\linewidth]{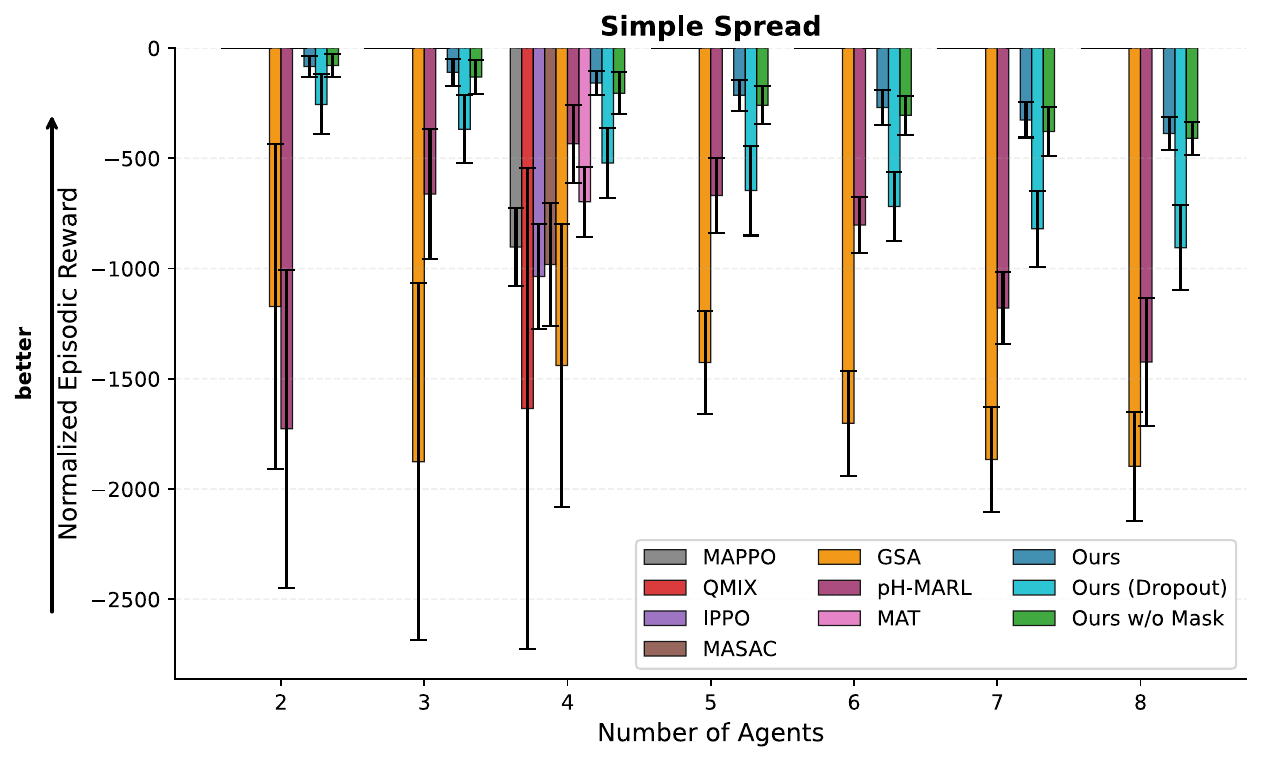}
        \caption{Simple Spread}
        \label{fig:simple_spread}
    \end{subfigure}
    ~
    \begin{subfigure}[t]{0.49\linewidth}
        \includegraphics[width=\linewidth]{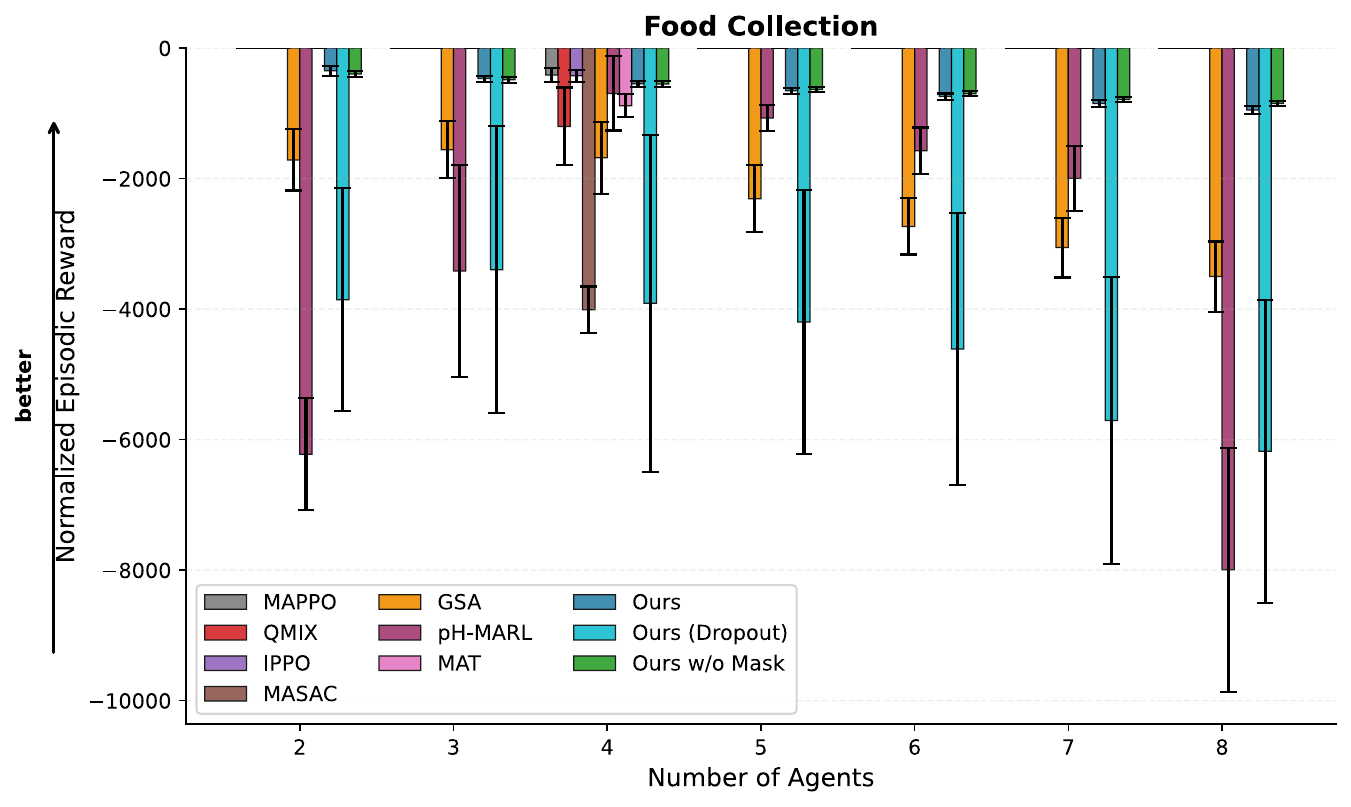}
        \caption{Food Collection}
        \label{fig:food_collection}
    \end{subfigure}
    \caption{Mean and standard deviation of normalized cumulative reward for Simple
    Spread (a) and Food Collection (b) (less negative is better; ``better'' arrow on
    left axis). Methods with fixed-size architectures (MAPPO, QMIX, IPPO, MASAC, MAT)
    appear only at the training configuration $N{=}4$, since they cannot be deployed at
    other team sizes without retraining. Methods with variable-size architectures (GSA,
    pH-MARL, Ours, Ours w/o Mask, Ours Dropout) are evaluated across $N \in [2, 8]$.
    Our method maintains robust performance across the full range despite training only
    on $N{=}4$.}
\end{figure}

\subsection{SMAC: Isolating Adversarial Robustness}
\label{subsec:experiment_smac}

We employ SMACLite~\cite{michalski2023smaclite} to test coordination in a
high-dimensional adversarial environment with rigid action spaces that typically prevent
transfer. Unlike VMAS, SMAC introduces an active opponent (the built-in AI), and the
discrete action space depends on the number of enemy units, making cross-scenario
transfer non-trivial.

\textbf{Architectural modification for transfer.}
We train on the \textbf{2s3z} scenario and evaluate zero-shot on \textbf{3s5z}. To
enable transfer across action spaces, we modify the architecture's final output layer:
the final Diamond Attention block is replaced with two linear layers sandwiching an
activation, projecting agent embeddings into opponent-dependent logits. This allows the
model to handle the changing action space (where actions depend on the number of
enemies). We do not change the size of the model. Given that the task is adversarial, our hypothesis for the degradation for the train task is because the model with structured randomness is trying to learn a more general solution.

\begin{table}[h]
    \centering
    \begin{tabular}{|c|c|c|c|c|c|c|c|}
    \hline
        & MAPPO & IPPO & QMIX & MAT & Ours & Ours & Ours \\
        &  &  &  &  &  & w/o mask & (dropout) \\
        \hline
        2s3z (Train) & $1.000$ & $1.000$ & $1.000$ & $1.000$ & $0.832$ & $1.000$ & $1.000$ \\
        \hline
        3s5z (Zero-shot) & - & - & - & - & \textbf{0.497} & $0.000$ & $0.000$ \\
        \hline
    \end{tabular}
    \caption{\small Win rates on SMAC scenarios. Standard methods achieve $100\%$ on the
    training task but cannot transfer to 3s5z due to fixed-head action-space constraints.
    Our method is capable of zero-shot transfer ($49.7\%$); removing the structured mask
    collapses transfer to $0.0\%$, and replacing the mask with dropout produces the same
    collapse.}
    \label{tab:smac_results}
    \vspace{-15pt}
\end{table}

\textbf{Results.}
\cref{tab:smac_results} reports win rates. The deterministic baselines (MAPPO, IPPO,
QMIX) achieve perfect performance on 2s3z but cannot transfer to 3s5z. Our method
achieves $49.7\%$ zero-shot transfer. Critically, the two ablations isolate \emph{what}
provides this transferability: removing the mask entirely (\textit{Ours w/o mask}) drops
transfer to $0\%$; replacing the mask with dropout does likewise, showing that
unstructured stochasticity is also insufficient. Only the \emph{structured} mask survives
the scenario shift, establishing that what enables transferable coordination is not noise
but the specific protocol-space structure of the random rank ordering.

%% file: sections/conclusion.tex
\section{Discussion, Limitations, and Impact}
\label{sec:conclusion}
The findings in this paper rest on a single distinction: coordination among homogeneous
agents requires asymmetry, and that asymmetry can be supplied either by the environment
or by the protocol itself. The two are interchangeable within distribution but not across
it. Environmental asymmetry---spatial variance, opponent-induced state shifts, observation
noise---is sufficient when training and deployment statistics align, and our no-mask
ablation on Food Collection shows this explicitly. Protocol-space asymmetry, in the form
of structured random rank ordering, is what survives when those statistics shift, as the
SMAC zero-shot transfer results demonstrate. Diamond Attention is the architectural
realization of this protocol-space asymmetry, requiring only a single broadcast round and
remaining decoupled from the specific number of agents in deployment. The contrast between
dropout and structured masking on SMAC transfer ($0\%$ vs $49.7\%$) sharpens the claim:
it is not noise that enables coordination across distribution shift, but the specific
structure of the random ordering.

\textbf{What the ablations isolate.}
\textit{Ours w/o mask} matches the full method on Food Collection
(\cref{fig:food_collection}) but fails on XOR (\cref{tab:xor_results}) and collapses to
$0\%$ transfer on SMAC (\cref{tab:smac_results}). The dropout ablation's $0\%$ confirms
the operative ingredient is the protocol-space ordering, not stochasticity itself: both
ablations confirm that neither a deterministic policy nor unstructured noise suffices when
no environmental symmetry-breaking signal is available or when that signal shifts at
deployment.

\label{subsec:learning_vs_coordination}
\textbf{Coordination capacity vs.\ learning capacity.}
Our XOR results empirically validate the theoretical bounds of~\citet{case2005learning}:
deterministic architectures cannot exceed the random-action floor in symmetric
coordination tasks, and structured randomness lifts it. A separate question is whether
the architecture can \emph{learn} to coordinate at scale. In an $n$-player, $n$-action
XOR setting the probability of non-zero reward under random play is $n!/n^n$, which
vanishes rapidly; at $n=5$ training does not converge even with reward shaping. This is a
learning bottleneck as well as architectural one, given the performance of MAT. The same architecture that solves $n=k=2$
and $n=k=3$ generalizes effectively to up to eight agents in VMAS and SMAC, where
scenario complexity---spatial proximities, continuous feedback, observable opponent
positions---provides the richer signals PPO can exploit. The coordination protocol works
in both regimes; the difference is whether PPO can find it.

\textbf{Deployment.}
Zero-shot generalization to varying agent counts implies that a single trained policy can
be deployed across fleets of fluctuating size without retraining. Because coordination
does not rely on synchronized global communication at execution time, the system tolerates
node failures and communication latency in ways that fixed-permutation autoregressive
methods cannot.

\textbf{Limitations.}
Three limitations are worth naming. First, the SMAC transfer result requires replacing the
final Diamond Attention block with a linear projection to handle variable action spaces,
trading representational capacity on the training task ($0.832$ on 2s3z against $1.000$
for fixed-head baselines) for the ability to generalize at all. Furthermore, there is a need to study the effect of structured randomness and the tradeoff of generalization and model capacity. Adaptive projection layers
that preserve coordination capacity under variable action spaces are a natural next step.
Second, the architecture's ability to learn coordination at scale is bottlenecked by
reward sparsity under PPO: convergence fails for $n > 5$ even with reward shaping, and
off-policy training or curriculum approaches may circumvent this. Third, the protocol
assumes a broadcast model in which agents share scalar values within a single timestep;
strict point-to-point deployments would require an explicit consensus layer, which would
in turn require its own coordination mechanism.

\textbf{Impact.}
This work contributes a coordination primitive whose operative state is generated
internally and stochastically per step rather than read from environmental signals.
Decentralized coordination without reliance on global synchronization or fixed agent
identities offers value in time-critical deployments where infrastructure is
unreliable---disaster response, search-and-rescue---and the internally-sampled
coordination state is harder for adversaries to predict or jam than schemes that rely on
observable environmental cues. The same properties carry real dual-use risk: the
resilience that benefits civilian applications also makes the approach attractive for
autonomous adversarial swarms or military systems operating under contested
communication. We do not see a clean mitigation here, as the underlying mechanism is
general. 

%% file: sections/appendix.tex
\section{Proof of Theorem}
\label{app:proof}
\begin{proof}
    In an $n$-homogeneous player system, each player can receive at most $(n-1)k$ bits, 
    assuming each transmits $k$ bits.

    Let $M$ be a player that plays the XOR game optimally. Assume at least one player 
    transmits $k$ bits (otherwise consider a system transmitting $k-1$ bits, which is 
    valid since players may use Routine 4 to transition to states transmitting fewer 
    bits). Construct $M'$ that runs $M$ while tracking transmitted and received bits. If 
    $M$ stops having transmitted fewer than $k$ bits, $M'$ broadcasts $0$s to reach $k$, 
    ensuring $k$ bits are transmitted and an opportunity exists to receive $k$ bits from 
    others. $M'$ is homogeneous to $M$ by definition.

    Play the game with $n$ copies of $M'$, and let $k_1, \dots, k_n$ be the $k$-length 
    bit strings transmitted. Let $\pi(k_i, k_1, \dots, k_n)$ denote player $i$'s action 
    distribution. If any two strings are identical ($k_i = k_j$), the corresponding 
    distributions are identical and a collision occurs. If all strings are distinct, 
    $\pi$ can assign probability $1$ to a unique action per player. Since homogeneous 
    agents differ only through random bits, the collision probability is minimized under 
    uniform sampling, giving:
    \[
        P(\text{No Collision}) \;\le\; \frac{\dbinom{2^{k}}{n} \cdot n!}{2^{nk}}
    \]

    Now consider $n$ homogeneous players each generating $k$ uniform random bits and 
    sharing them, then using a fixed strict ordering over $k$-bit strings to determine 
    execution order. Homogeneity ensures all players use the same ordering. A conflict 
    arises only if two strings collide, and the probability all $n$ strings are distinct 
    is exactly:
    \[
        P(\text{Different Strings}) \;=\; \frac{\dbinom{2^{k}}{n} \cdot n!}{2^{nk}}
    \]
    The two expressions are equal, establishing the equivalence.
\end{proof}

\section{Hyperparameters and Implementation details}
All training was on NVIDIA Titan X GPU.

\begin{table}[h]
\centering
\begin{tabular}{lcccc}
\hline
\textbf{Parameter} & \textbf{Food Collection} & \textbf{Simple Spread} & \textbf{XOR Game} & \textbf{SMACLite} \\
\hline

\multicolumn{5}{c}{\textit{Environment}} \\
\hline
num\_envs & 64 & 64 & 64 & 8 \\
continuous\_actions & True & True & -- & -- \\
max\_steps & 400 & 400 & -- & 108000 \\
terminated\_truncated & False & False & -- & -- \\

\hline
\multicolumn{5}{c}{\textit{PPO}} \\
\hline
batch\_size & 1024 & 1024 & 128 & 320 \\
n\_epochs & 10 & 10 & 1 & 10 \\
gamma & 0.99 & 0.99 & 0.99 & 0.99 \\
n\_steps & 160 & 160 & 2 & 160 \\
vf\_coef & 0.5 & 0.5 & 0.5 & 0.5 \\
ent\_coef & 0.001 & 0.0 & 0.0 & 0.001 \\
target\_kl & 0.25 & 0.25 & 0.25 & 0.25 \\
max\_grad\_norm & 10.0 & 10.0 & 10.0 & 10.0 \\
learning\_rate & $1 \times 10^{-4}$ & $1 \times 10^{-4}$ & $1 \times 10^{-4}$ & $1 \times 10^{-4}$ \\

\hline
\multicolumn{5}{c}{\textit{Training}} \\
\hline
total\_timesteps & 1200000 & 1200000 & 50000 & 1200000 \\

\hline
\end{tabular}
\caption{Comparison of general hyperparameters across all scripts}
\end{table}

\section{Generalization in Food Collection}
\label{app:generalization_collection}

\cref{fig:food_collection_heatmap} reports our method's performance across joint 
variations in agent count $N_a$ and food count $N_f$. Two trends emerge. For fixed 
$N_a$, normalized reward decreases as $N_f$ grows, an artifact of the reward function 
where the cumulative distance-to-food penalty scales linearly with $N_f$. For fixed 
$N_f$, $R$ improves with $N_a$ as a denser agent population covers the area more 
effectively, increasing the probability that a respawned food item appears near an 
agent. Reward variance also decreases as $N_a$ grows, indicating that the team learns a 
stable decentralized coverage strategy rather than relying on stochastic individual 
successes.

\begin{figure}[t]
    \centering
    \includegraphics[width=0.9\linewidth]{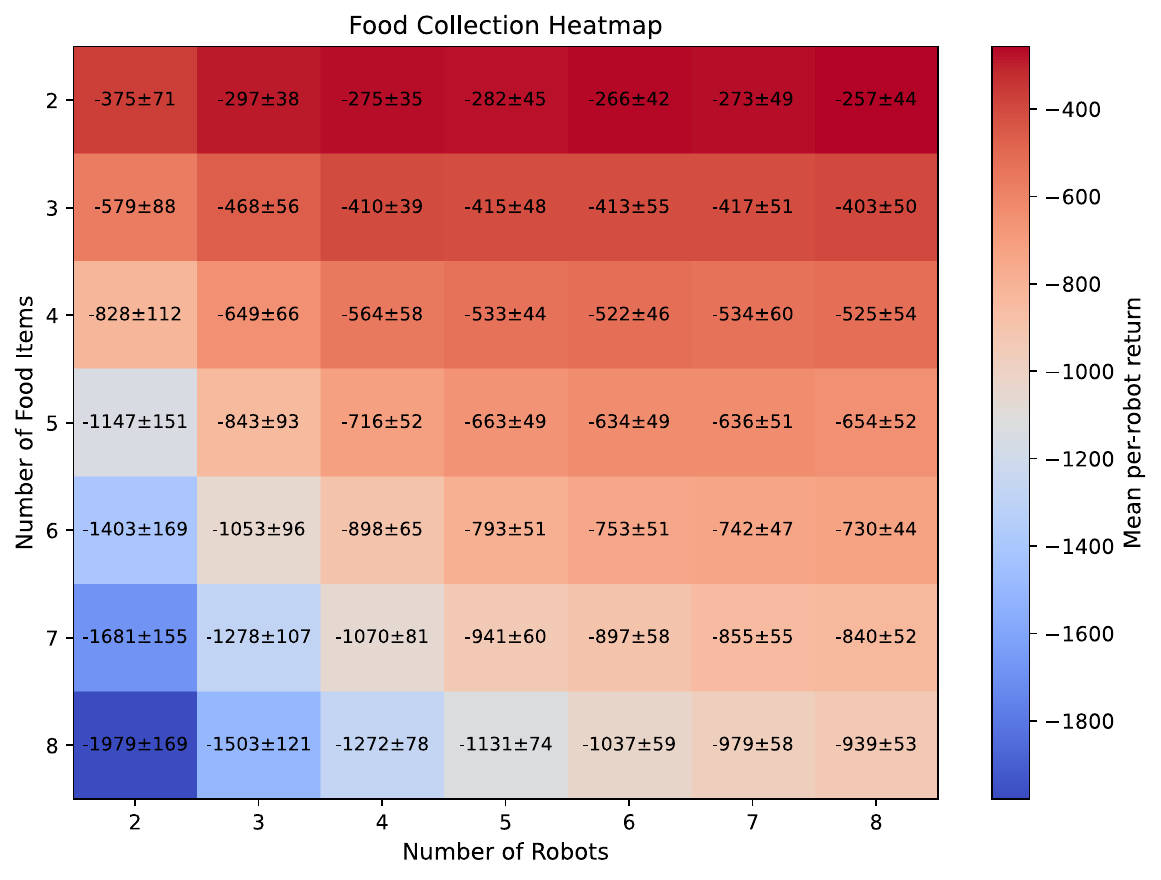}
    \caption{Mean and standard deviation of $R$ for Food Collection across 64 runs, 
    varying both agent count $N_a$ (x-axis) and food count $N_f$ (y-axis). For fixed 
    $N_a$, $R$ degrades as $N_f$ grows due to per-food distance penalties. For fixed 
    $N_f$, $R$ improves with $N_a$, and variance decreases, indicating learned 
    decentralized coverage rather than stochastic individual success.}
    \label{fig:food_collection_heatmap}
\end{figure}

\section{Attention Mechanism Visualizations}
\label{app:visualization}

These are visual snapshots of the per-agent cross-attention weights in evaluation mode to offer deeper insight into the dynamic masking process. 

Figures~\ref{fig:spread_snapshots} and~\ref{fig:food_snapshots} illustrate the environment state alongside the cross-attention heatmaps for a 4-agent team in the Simple Spread and Food Collection scenarios, respectively, at different timesteps. The dynamic nature of the randomized structured mask is clearly visible: at any given timestep, the attention matrices are structurally asymmetric across the team. Agents assigned a higher transient rank by the masking protocol exhibit highly concentrated attention, effectively acting as temporary "leaders" for specific subtasks, while lower-ranked agents exhibit broader attention across their peers. As the random numbers are resampled, this fluid hierarchy shifts continuously throughout the episode, successfully breaking coordination symmetry without requiring explicit communication.

\begin{figure}[t]
    \centering
    \includegraphics[width=0.7\textwidth]{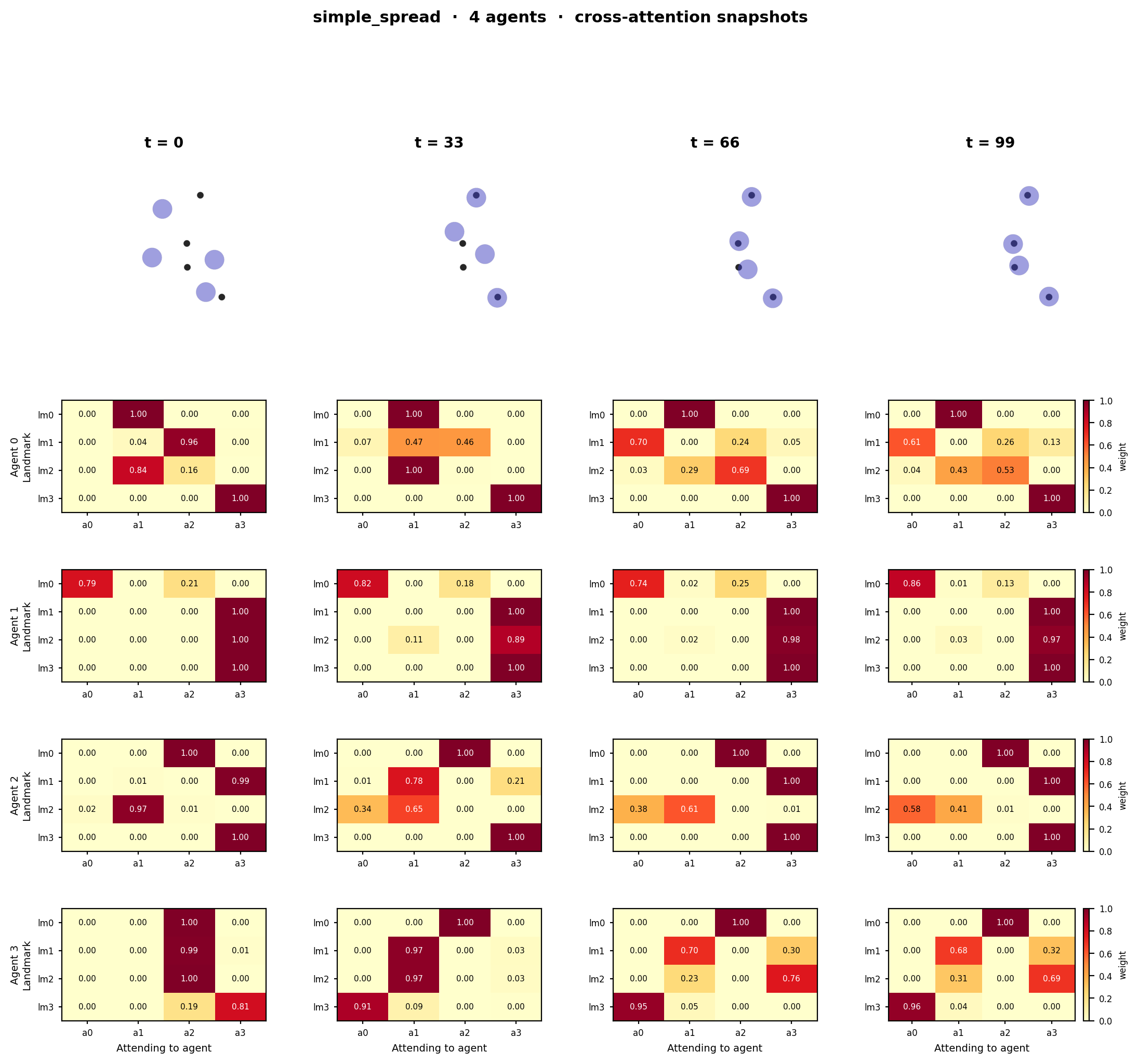}
    \caption{Cross-attention snapshots for 4 agents in the Simple Spread scenario. The top row displays the spatial distribution of agents and landmarks at various timesteps. The subsequent rows display the corresponding attention weights for Agents 0 through 3. The heatmaps confirm the application of the structured mask, revealing the transient, asymmetric attention hierarchies that enable dynamic coordination.}
    \label{fig:spread_snapshots}
\end{figure}

\begin{figure}[t]
    \centering
    \includegraphics[width=0.7\textwidth]{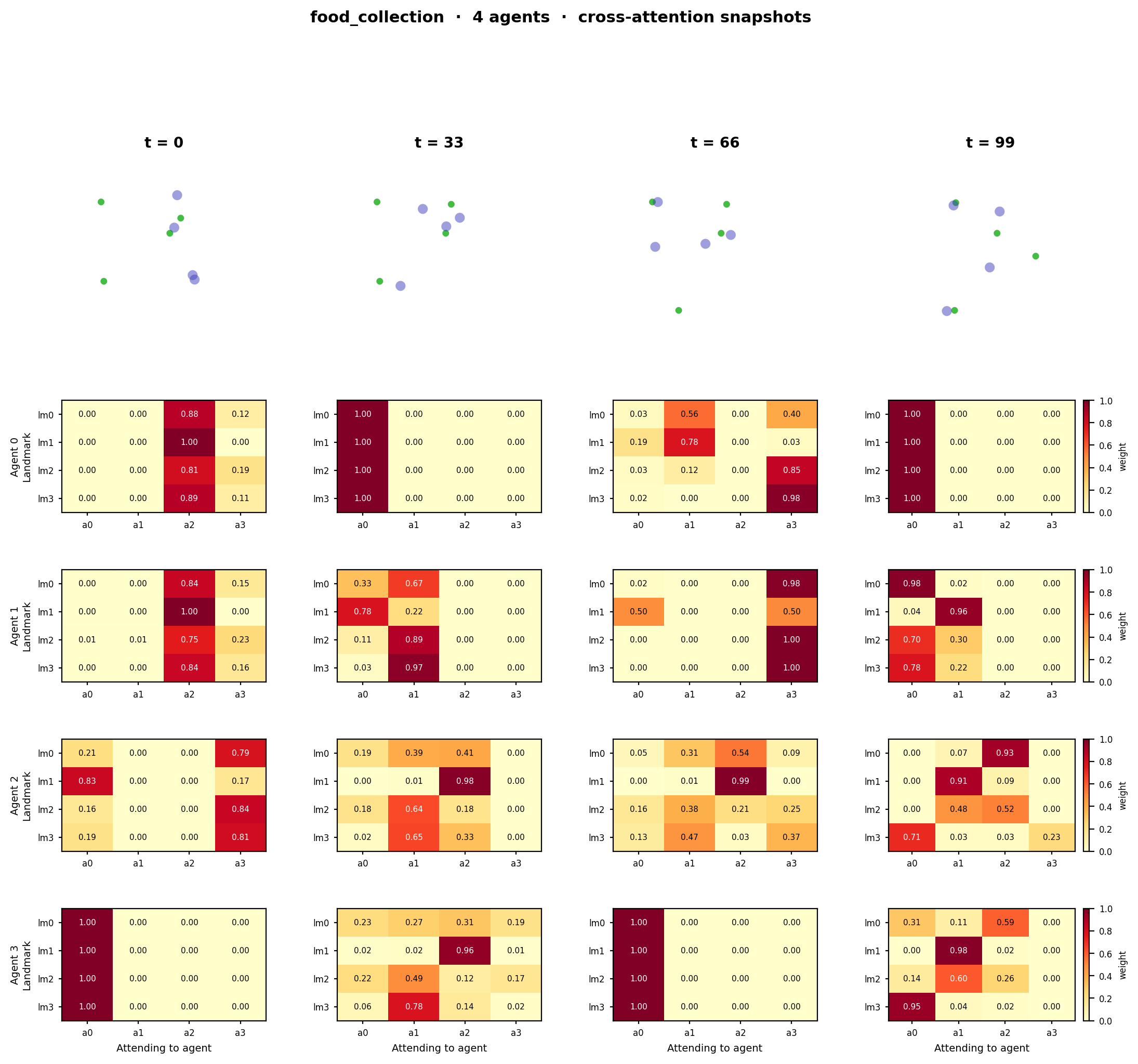}
    \caption{Cross-attention snapshots for 4 agents in the Food Collection scenario. Similar to the Simple Spread task, the top row displays the spatial distribution of agents (blue) and food items (green), with the rows below showing the shifting, asymmetric attention weights that drive decentralized coordination over the course of the non-stationary episode.}
    \label{fig:food_snapshots}
\end{figure}